\def\F{\mathcal{F}}
\def\R{\mathcal{R}}
\DeclareMathOperator{\bigO}{\mathcal{O}}
\newcommand{\ExpectSub}[2]{\mbox{}{\mathbb{E}}_{#1}\left[#2\right]}
\newcommand{\Expect}[1]{\mbox{}{\mathbb{E}}\left[#1\right]}
\def\reals{\mathbb{R}}
\def\Expectation{\mathbb{E}}
\def\Probability{\mathbb{P}}
\begin{document}
\title{Tensor machines for learning target-specific polynomial features}
\titlerunning{Tensor machines}

\author{Jiyan Yang\inst{1} \and Alex Gittens\inst{2}}
\institute{Stanford University\\
Stanford, CA, USA \\
\email{jiyan@stanford.edu},
\and
International Computer Science Institute\\
Berkeley, CA, USA\\
\email{gittens@icsi.berkeley.edu}
}

\maketitle
\begin{abstract}
Recent years have demonstrated that using random feature maps can significantly decrease the training and testing times of kernel-based algorithms without significantly lowering their accuracy. Regrettably, because random features are target-agnostic, typically thousands of such features are necessary to achieve acceptable accuracies. In this work, we consider the problem of learning a small number of explicit polynomial features. Our approach, named {\it Tensor Machines}, finds a parsimonious set of features by optimizing over the hypothesis class introduced by Kar and Karnick for random feature maps in a {\it target-specific} manner. Exploiting a natural connection between polynomials and tensors, we provide bounds on the generalization error of Tensor Machines. Empirically, Tensor Machines behave favorably on several real-world datasets compared to other state-of-the-art techniques for learning polynomial features, and deliver significantly more parsimonious models.
\end{abstract}

\section{Introduction}
Kernel machines are one of the most popular and widely adopted paradigms in machine learning and data analysis. This success is due to the fact that an appropriately chosen non-linear kernel often succeeds in capturing non-linear structures inherent to the problem without forming the high-dimensional features necessary to explicitly delineate those structures. Unfortunately, the cost of kernel methods scales like $\bigO(n^3)$, where $n$ is the number of datapoints. Because of this high computational cost, it is often the case that kernel methods cannot exploit all the information in large datasets; indeed, even the $\bigO(n^2)$ cost of forming and storing the kernel matrix can be prohibitive on large datasets.

As a consequence, methods for approximately fitting non-linear kernel methods have drawn much attention in recent years.  In particular, explicit random feature maps have been proposed to make large-scale kernel machines practical~\cite{RR07,KK12,PP13,HXGD13,YSAM14,YSFAM14}. The idea behind this method is to randomly choose an $r$-dimensional feature map $\bm{\phi}$ that satisfies $k(\mathbf{x},\mathbf{y}) = \Expect{\bm\phi(\mathbf{x})^T\bm\phi(\mathbf{y})},$ where $k$ is the kernel of interest~\cite{RR07}.  Let $\bm\Phi$ be the matrix whose rows comprise the application of $\bm\phi$ to $n$ datapoints, then the kernel matrix $\mathbf{K}$ has the low-rank approximation $\bm{\Phi\Phi}^T.$ This approximation considerably reduces the costs of both fitting models, because one can work with the $n$-by-$r$ matrix $\bm\Phi$ instead of $\mathbf{K}$, and of forming the input, because one need form only $\bm\Phi$ instead of $\mathbf{K}.$ 

The success of this approximation hinges on choosing $\bm\phi$ so that $\bm\phi(\mathbf{x})^T \bm\phi(\mathbf{y})$ is close to $k(\mathbf{x},\mathbf{y})$ with high probability.  One easy way to accomplish this is to take $\bm\phi(\mathbf{x}) = \frac{1}{\sqrt{r}}(\phi_1(\mathbf{x}), \ldots, \phi_r(\mathbf{x})),$ where the random features $\phi_i$ all satisfy $\Expect{\phi_i(\mathbf{x})\phi_i(\mathbf{y})} = k(\mathbf{x},\mathbf{y}).$ The concentration of measure phenomenon then ensures that $|k(\mathbf{x},\mathbf{y}) - \bm\phi(\mathbf{x})^T\bm\phi(\mathbf{y})|$ is small with high probability.  In practice, one must use a large number of random features to achieve performance comparable with the exact kernel: $r$ must be on the order of tens or even hundreds of thousands~\cite{HXGD13}. The reason for this is simple: since knowledge of the target function is not used in generating the feature map $\bm\phi,$ to ensure good performance enough random features must be selected to get good approximations to \emph{arbitrary} functions in the reproducing kernel Hilbert space associated with $k.$ Thus under the random feature map paradigm, one trades off a massive reduction in the cost of optimization for the necessity of generating a large number of random features, each of which is rather uninformative about any given target.

Implicit in the formulation of random feature maps is the assumption that the chosen class of features is expressive enough to cover the function space of interest, yet simple enough that features can be sampled randomly in an efficient manner.  Given this assumption, it seems natural to question whether it is possible to directly select a much smaller number of features relevant to the given target in a computational efficient manner. 
\begin{framed}
    \centerline{\textbf{Target-specific optimization vs target-agnostic randomization}}
    Given a kernel $k$ and a parametrized family of features $\{\phi_\omega\}_{\omega \in \Omega}$ satisfying $k(\mathbf{x}, \mathbf{y}) = \Expect{\phi_\omega(\mathbf{x})^T\phi_\omega(\mathbf{y})}$ where the expectation is with respect to some distribution on $\Omega,$ can optimization over this hypothesis class to select $r$ features give a parsimonious representation of a known target, with the same or less computational effort and less error than target-agnostic random sampling of $\omega$?
\end{framed}

\begin{figure}
\begin{centering}
\includegraphics[width=0.9\textwidth]{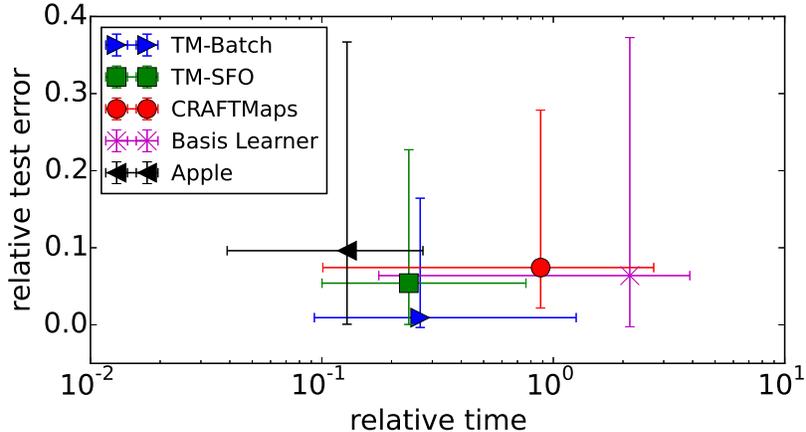}
\caption{ The relative test error and relative running time (relative to kernel ridge regression on a subset of the training data) of several polynomial learning algorithms over $10$ datasets. The median and first and third quartiles of the error and running times are shown. TM-Batch and TM-SFO are solvers for the Tensor Machine model class introduced in this work. See Section~\ref{sxn:empirical} for further discussion.}
\label{fig:overall}
\end{centering}
\end{figure}

This question has recently been answered in the affirmative for the case of the radial basis kernel~\cite{YSSW15,YKRC15}. 
In this work, we provide a positive answer to the above question in the case of polynomial kernels and the Kar--Karnick random features introduced in~\cite{KK12} (see Figure~\ref{fig:overall}). Polynomial kernels are of interest because, after the radial basis kernel, they are arguably the most widely used kernels in machine learning applications. The Weierstrass approximation theorem guarantees than any smooth function can be arbitrarily accurately approximated by a polynomial~\cite{R76}[Chapter 7], so in principle a polynomial kernel of sufficiently high degree can be used to accurately approximate any smooth target function. 

In this work, we use a natural connection between tensors and polynomials to introduce a restricted hypothesis class of polynomials corresponding to low-rank tensors. In the Tensor Machine paradigm, learning consists of learning a regularized low-rank decomposition of a hidden tensor corresponding to the target polynomial. Given $n$ training points in $d$-dimensional input space, the computational cost of fitting a degree $q$ TM with a rank-$r_{\text{TM}}$ approximation for each degree (using a first-order algorithm) is $\bigO(ndq^2r_{\text{TM}})$, while the cost of fitting a predictor using the Kar--Karnick random feature maps and kernel regression approach of~\cite{KK12,HXGD13} is $\bigO(ndqr_{\text{KK}} + nr_{\text{KK}}^2)$. In practice the $r_{\text{KK}}$ required for accurate predictions is typically on the order of thousands; by way of comparison, we show empirically that TMs typically require less than $r_{\text{TM}}=5$ to achieve a comparable approximation error!  

We show experimentally that Tensor Machines strike a favorable balance between expressivity and parsimony, and provide an analysis of Tensor Machines that establishes favorable generalization properties. The objective function for fitting Tensor Machines is nonconvex, but we show empirically that it is sufficiently structured that the process of fitting a TM is robust, efficient, and requires very few features. We demonstrate that our algorithm exhibits a more favorable time--accuracy tradeoff when compared to the random feature map approach to polynomial regression~\cite{KK12,PP13,HXGD13}, as well as to the polynomial network algorithm of~\cite{LSS14a}, and the recently introduced Apple algorithm for online sparse polynomial regression~\cite{ABHLT14}.

\section{Prior Work}

Consider the estimation problem of fitting a polynomial function $f$ to $n$
i.i.d. training points $(\mathbf{x}_i, y_i)$ drawn from the same unknown
distribution, formulated as 
\begin{equation} 
    \label{eqn:polykernelobjective}
    \hat{f} = \text{arg min}_{f \in \mathcal{H}_q} \frac{1}{n} \sum_{i=1}^n \ell(f(\mathbf{x}_i), y_i) + \lambda \|f\|_{\mathcal{H}_q},
\end{equation}
where $\mathcal{H}_q$ is the reproducing kernel Hilbert space of all
polynomials of degree at most $q$ and $\ell$ is a specified loss function.
An exact solution (to within a specified numerical precision) to this
problem can be obtained in $\bigO(n^3)$ time using classical kernel
methods.

One approach in the literature has been to couple kernel methods with various
techniques for approximating the kernel matrix. The underlying assumption is
that the kernel matrix is numerically low-rank, with rank $r \ll n$; typically
these methods reduce the cost of kernel methods from $\bigO(n^3)$ to
$\bigO(nr^2).$ Nystr\"{o}m approximations, sparse greedy approximations, and
incomplete Cholesky factorizations fall into this class~\cite{WS01,SS00,FS02}.
In~\cite{BJ05}, Bach and Jordan observed that prior algorithms in this class
did not exploit \emph{all} the knowledge inherent in supervised learning tasks:
namely, they did not exploit knowledge of the targets (classification labels or
regression values). They showed that by exploiting knowledge of the target, one
can construct low-rank approximations to the kernel matrix that have
significantly smaller rank, with a computational cost that remains
$\bigO(nr^2).$ 

Another approach to polynomial-based supervised learning relies upon the
modeling assumption that the desired target function can be approximated well
in the subspace of $\mathcal{H}_q$ consisting of \emph{sparse} polynomials.
Recall that a sparse degree-$q$ polynomial in $d$ variables is one in which
only a few of the possible monomials have nonzero coefficients (there are
exponentially in $d$ many such monomials). The early work of Sanger et al.
attempts to learn the monomials relevant to the target in an online manner by
augmenting the current polynomial with interaction features~\cite{SSM92}. The
recent Apple algorithm of Agarwal et al. attempts to learn a sparse polynomial,
also in an online manner, using a different heuristic that selects monomials
from the current polynomial to be used in forming the next term of the
monomial~\cite{ABHLT14}.  The algorithms presented in~\cite{APVZ14}
and~\cite{KSDK14} provide theoretical guarantees for fitting sparse
polynomials, but their computational costs scale undesirably for large-scale
learning. 

Polynomial fitting has also been tackled using the neural network paradigm.
In~\cite{LSS14a}, Livni et al. provide an algorithm for learning polynomial
functions as deep neural networks which has the property that the training
error is guaranteed to decrease at each iteration. This algorithm has the
desirable properties that the network learns a linear combination of a set of
polynomials that are constructed in a target-dependent way, and that the
degree of the polynomial does not have to be specified in advance: instead,
additional layers can be added to the network until the desired error threshold
has been reached, with each layer increasing the degree of the predictor by
one. Unfortunately, this algorithm requires careful tuning of the
hyperparameters (number of layers, and the width of each layer). In the
subsequent work~\cite{LSS14b}, Livni et al. provide an algorithm for fitting
cubic polynomials in an iterative manner using rank-one tensor approximations. It
can be shown that, in fact, this algorithm greedily fits cubic polynomials in
the Tensor Machine class we propose in this paper.

Factorization Machines combine the expressivity of polynomial regression with
the ability of factorization models to infer relationships from sparse training
data~\cite{R10}. Quadratic Factorization Machines, as introduced by Rendle, are
models of the form
\[
    f(\mathbf{x}) = \omega_0 + \langle \bm{\omega}_1, \mathbf{x} \rangle + \sum_{i = 1}^d \sum_{j = i+1}^d \langle \mathbf{v}_i, \mathbf{v}_j \rangle x_i x_j,
\]
where a vector $\mathbf{v}_i$ is learned for each coordinate of the input
$\mathbf{x}.$ These models have been applied with great success in
recommendation systems and related applications with sparse input $\mathbf{x}.$
Quadratic FMs can be fit in time linear in the size of the data, the degree of
the polynomial being fit, and the length of the vectors $\mathbf{v}_i$, but can
only represent nonlinear interactions that can be written as symmetric
homogeneous polynomials (plus a constant).  In particular, FMs cannot represent
polynomials that involve monomials containing variables raised to powers
higher than one. For instance, $x_2^2$ cannot be represented as a Factorization
Machine. Another drawback to Factorization Machines is that explicitly evaluating
the sums involved for a degree-$q$ FM requires $\bigO(d^q)$ operations.
A computational manipulation allows quadratic FMs to be fit in linear time,
and it has been claimed that similar manipulations for higher-order FMS~(\cite{R10}),
but no such generalizations have been documented. Perhaps for this reason, only 
second-order FMs have been used in the literature.

The random feature maps approach to polynomial-based learning~\cite{RR07}
exploits the concentration of measure phenomenon to directly construct a random
low-dimensional feature map that approximately decomposes $\mathbf{K}.$ Kar and
Karnick provided the first random feature map approach to polynomial regression
in~\cite{KK12}; their key observation is that the degree-$q$ homogeneous
polynomial kernel $k(\mathbf{x}, \mathbf{y}) = \langle \mathbf{x},
\mathbf{y}\rangle^q$ can be approximated with random features of the form
\[
\phi_i(\mathbf{x}) = \prod_{j=1}^q \langle \bm{\omega}_j^i, \mathbf{x} \rangle,
\]
where the vectors $\bm{\omega}_j^i$ are vectors of random signs; that is,
random feature maps of the form
\[
\bm{\phi} = \frac{1}{\sqrt{r}} (\phi_1, \ldots, \phi_r)
\]
satisfy the condition 
$k(\mathbf{x}, \mathbf{y}) = \Expect{\bm{\phi}(\mathbf{x})^T \bm{\phi}(\mathbf{y}) }$ 
necessary for the random feature map approach. Pagh and Pham provided a
qualitatively different random feature map for polynomial kernel machines,
based on the tensorization of fast hashing transforms~\cite{PP13}. These
TensorSketch feature maps often outperform Kar--Karnick feature maps, but both
methods require a large $r.$ The CRAFTMaps approach of~\cite{HXGD13} combines
either TensorSketch or Kar--Karnick random feature maps with fast
Johnson--Lindenstrauss transforms to significantly reduce the number of features 
needed for good performance.  

Of these methods, the Tensor Machine approach introduced in this paper is most
similar to the Factorization Machine and Kar--Karnick random feature map
approaches to polynomial learning.

\section{Tensors and polynomials} 

To motivate Tensor Machines, which are introduced in the next section, we first
review the connection between polynomials and tensors. For simplicity we
consider only \emph{homogeneous} degree-$q$ polynomials: the monomials of such
a polynomial all have degree $q.$ We show that Kar--Karnick predictors and
Factorizaton Machines correspond to specific tensor decompositions.

Recall that a tensor $\mathbf{T}$ is a multidimensional array, 
\[
\mathbf{T} = (T_{i_1, i_2, \ldots, i_q})_{i_1, \ldots, i_q}.
\]
The number of indices into the array, $q$, is also called the degree of the
tensor. 
The inner product of two conformal tensors $\mathbf{T}$ and $\mathbf{S}$ is
obtained by treating them as two vectors: 
\[
\langle \mathbf{T}, \mathbf{S} \rangle = \sum_{i_1} \sum_{i_2} \cdots \sum_{i_q} T_{i_1, i_2, \ldots, i_q}S_{i_1, i_2, \ldots, i_q}.
\]
The Segre outer product of vectors 
$\bm{\omega}_1 \in \reals^{d_1}, \ldots, \bm{\omega}_q \in \reals^{d_q}$ is the 
$d_1 \times d_2 \times \cdots \times d_q$ tensor that satisfies
\[
(\bm{\omega}_1 \bullet \cdots \bullet \bm{\omega}_q )_{i_1, i_2, \ldots, i_q} = 
(\bm{\omega}_1)_{i_1} (\bm{\omega}_2)_{i_2} \cdots (\bm{\omega}_q)_{i_q}.
\]
The degree-$q$ self-outer product of a vector $\bm{\omega}$ is denoted by 
$\bm{\omega}^{(q)}.$

Given a decomposition for a tensor $\mathbf{T}$ of the form
\[
  \mathbf{T} = \sum_{i=1}^r \bm{\omega}_1^i \bullet \cdots \bullet \bm{\omega}_q^i
\]
where $r$ is minimal, $r$ is called the rank of the tensor. A tensor $\mathbf{T}$ 
is \emph{supersymmetric} if there exists a decomposition of the form
\[
 \mathbf{T} = \sum_{i=1}^r \bm{\omega}_i^{(q)}.
\]



The tensor $\mathbf{x}^{(q)}$ comprises all the degree-$q$ monomials in the
variable $\mathbf{x},$ so any degree-$q$ homogeneous polynomial $f$ satisfies
$f(\mathbf{x}) = \langle \mathbf{T}, \mathbf{x}^{(q)} \rangle$ for some
degree-$q$ tensor $\mathbf{T};$ likewise, any degree-$q$ tensor $\mathbf{T}$
determines a homogenous polynomial $f$ of degree $q.$ This equivalence between
homogeneous polynomials and tensors allows us the attack the problem of
polynomial learning as one of learning a tensor. The Kar--Karnick random
feature maps and Factorization Machine approaches can both be viewed through
this lens. 

A single Kar--Karnick random polynomial feature corresponds to a rank-one tensor:
\begin{multline*}
 \phi(\mathbf{x}) = \prod_{j=1}^q \langle \bm{\omega}_j, \mathbf{x} \rangle = 
 \prod_{j=1}^q \left( \sum_{i=1}^d (\bm{\omega}_j)_i x_i \right) \\
 = \sum_{i_1=1}^d \cdots \sum_{i_q=1}^d (\bm{\omega}_1)_{i_1} \cdots (\bm{\omega}_q)_{i_q} x_{i_1} \cdots x_{i_q} \\
= \langle \bm{\omega}_1 \bullet \cdots \bullet \bm{\omega}_q, \mathbf{x}^{(q)} \rangle.
\end{multline*}
Accordingly, predictors generated by the random feature maps approach with the 
Kar--Karnick feature map $\bm{\phi}$ satisfy 
\[
f(\mathbf{x}) = \sum_{i=1}^r \alpha_i \phi_i(\mathbf{x}) = 
\left\langle \sum_{i=1}^r \alpha_i \bm{\omega}_1^i \bullet \bm{\omega}_2^i \bullet \cdots \bullet \bm{\omega}_q^i, \mathbf{x}^{(q)} \right\rangle.
\]
for some coefficient vector $\bm{\alpha}.$ That is, Kar--Karnick predictors 
correspond to tensors in the span of $r$ \emph{randomly sampled} degree-$q$ rank-one tensors.

Factorization Machines fit predictors of the form
\[
    f(\mathbf{x}) = \sum_{\mathclap{i_q > i_{q-1} > \cdots > i_1}} \langle \mathbf{v}_{i_1}, \ldots, \mathbf{v}_{i_q}\rangle x_{i_1} \cdots x_{i_q},
\]
where the factors $\mathbf{v}_i$ are vectors in $\reals^m$ and 
\[
\langle \mathbf{v}_{i_1}, \ldots \mathbf{v}_{i_q} \rangle = 
\sum_{j=1}^m (\mathbf{v}_{i_1})_j \cdots (\mathbf{v}_{i_q})_j
\]
is the generalization of the inner product of two vectors. Let $\mathbf{V}$ be 
the $d\times m$ matrix with rows comprising the vectors $\mathbf{v}_i,$ and 
define the $d$-dimensional vector $\bm{\omega}_j$ to be the $j$th column of 
$\mathbf{V}.$ It follows that FMs can be expressed in the form
\begin{align*}
    f(\mathbf{x}) & = \sum_{i_1=1}^{d-q+1} \cdots \sum_{i_q = i_{q-1}+1}^d \left( \sum_{j=1}^m (\mathbf{v}_{i_1})_j \cdots (\mathbf{v}_{i_q})_j \right) x_{i_1} \cdots x_{i_q} \\
                  & = \sum_{i_1=1}^{d-q+1} \cdots \sum_{i_q = i_{q-1}+1}^d \left( \sum_{j=1}^m (\bm{\omega}_j)_{i_1} \cdots (\bm{\omega}_j)_{i_q} \right) x_{i_1} \cdots x_{i_q} \\
                  & = \sum_{i_1=1}^{d-q+1} \cdots \sum_{i_q = i_{q-1}+1}^d \left( \sum_{j=1}^m \bm{\omega}_j^{(q)} \right)_{i_1, \ldots, i_q} \left(\mathbf{x}^{(q)} \right)_{i_1, \ldots, i_q} \\
 & = \left\langle \sum_{j=1}^m \bm{\omega}_j^{(q)} - \mathbf{R}, \mathbf{x}^{(q)} \right\rangle,
\end{align*}
where the tensor $\mathbf{R}$ is defined so that its nonzero entries cancel the 
entries of $\sum_{j=1}^m \bm{\omega}_j^{(q)}$ that correspond to coefficients of 
monomials like $x_1x_2^2$ that contain a variable raised to a power larger than one. The vectors $\bm{\omega}_j$ are learned during training (the tensor $\mathbf{R}$ is implicitly defined by these vectors).

\section{Tensor Machines}

Thus, the Kar--Karnick random feature map approach searches for a low-rank
tensor corresponding to the target polynomial and the Factorization Machine
approach attempts to provide a supersymmetric low-rank tensor plus sparse
tensor decomposition of the target polynomial. The Kar--Karnick approach has
the drawback that it tries to find a good approximation in the span of a set of
random rank-one tensors, so many such basis elements must be chosen to ensure
that an unknown target can be represented well. Factorization Machines
circumvent this issue by directly learning the tensor decomposition, but impose
strict constraints on the form of the fitted polynomial that are not
appropriate for general learning applications.

As an alternative to Kar--Karnick random feature maps and factorization
machines, we propose to instead fit Tensor Machines by \emph{directly
optimizing} over rank-one tensors to directly form a low-rank approximation to
the target tensor (i.e., the target polynomial). Since the targets are not, in
general, homogenous polynomials, we learn a different set of rank-one factors
for each degree up to $q$.  

More precisely, Tensor Machines are functions of the form
\[
    f(\mathbf{x}) =  \omega^0 + \langle \bm{\omega}^{1}, \mathbf{x} \rangle +  \sum_{p=2}^q \left\langle \sum_{i=1}^r \bm{\omega}_1^{p,i} \bullet \cdots \bullet \bm{\omega}_p^{p,i}, \mathbf{x}^{(p)} \right\rangle,
\]
obtained as minimizers to the following proxy to the kernel machine objective~\eqref{eqn:polykernelobjective}:
\begin{equation}
\label{eqn:tmobjective}
\frac{1}{n} \sum_{i=1}^n \ell(f(\mathbf{x}_i), y_i) + \lambda \|\bm{\omega}^1\|_2^2 + \lambda \sum_{p=2}^q \sum_{i=1}^r \sum_{j=1}^p \|\bm{\omega}_j^{p,i}\|_2^2.
\end{equation}
By construction, Tensor Machines couple the expressiveness of the Kar--Karnick random feature maps model with the parsimony of Factorization Machines. 

%
%

\section{Generalization Error}

In this section we argue that fitting Tensor Machines using empirical risk minimization makes efficient use of the training data.
We show that the observed risk of Tensor Machines converges to the expected risk at the optimal rate, thus indicating that empirical risk minimization is an efficient method for finding locally optimal Tensor Machines (assuming the optimization method avoids saddle points). For convenience, we consider a variant of Tensor Machines where the norms of the vectors $\bm{\omega}^{p,\ell}_j$ are constrained:
\begin{equation*}
    \begin{aligned}
      & \text{minimize} & & 
        \frac{1}{n} \sum_{i=1}^n \ell\Big( \omega^0 + \langle \bm{\omega}^1, \mathbf{x}_i \rangle +  \sum_{p=2}^q \langle \mathbf{T}_p, \mathbf{x}_i^{(p)} \rangle, y_i\Big) \\
        & \text{subject to} & & 
        \mathbf{T}_p = \sum_{j=1}^r \bm{\omega}^{p,j}_1 \bullet \cdots \bullet \bm{\omega}^{p,j}_p\; \text{for } p \geq 2, \\
        & & & \|\bm{\omega}^1\|_2 \leq B, \\
        & & & \|\bm{\omega}^{p,j}_i\|_2 \leq B \; \text{for all}\; p,j,i.
    \end{aligned}
\end{equation*}
This formulation is not equivalent to the formulation given in~\eqref{eqn:tmobjective}, but by taking $B$ to be the norm of the largest constituent vector in the fitted Tensor Machine, any bound on the generalization error of this formulation applies to the generalization error of Tensor Machines obtained by minimizing~\eqref{eqn:tmobjective}.

Recall that the Rademacher $\R_n(\F)$ complexity of a function class $\F$ measures how well random noise can be approximated using a function from $\F$~\cite{BM02}. 

\begin{definition}[Rademacher complexity]
    Given a function class $\F$ and i.i.d.\ random variables $\mathbf{z}_i$, the Rademacher complexity of $\F$, $\R_n(\F)$, is defined as
  $$ \R_n(\F) = \frac{1}{n} \ExpectSub{\{\mathbf{z}_i\},\sigma}{ \sup_{f\in\F} \sum_{i=1}^n \sigma_i f(\mathbf{z}_i)}, $$
  where the $\sigma_i$ are independent Rademacher random variables (random variables taking values $\pm 1$ with equal probability).
\end{definition}

%
Well-known results (e.g. \cite{BM02}[Theorems 7 and 8]) state that, with high probability over the training data, the observed classification and regression risks are within $\bigO(\R_n(\F) + 1/\sqrt{n})$ of the true classification and regression risks. In fact, the optimal rate of convergence for the observed risk to the expected risk is $\bigO(1/\sqrt{n})$, which can only be achieved when the Rademacher complexity of the hypothesis class is $\bigO(1/\sqrt{n})$ \cite{M08}.

%
%
%
Our main observation is that the Rademacher complexity of a Tensor Machines grows at the rate $\bigO(1/\sqrt{n})$, so the empirically observed estimate of the Tensor Machine risk converges to the expected risk at the optimal rate.

\begin{theorem}
\label{thm:tm-rademacher-complexity}
Let $\F_{d, q,r,B}$ denote the class of degree-$q$, rank-$r$ Tensor Machines on $\reals^d$ with constituent vectors constrained to lie in the ball of radius $B$:
\begin{align*}
    \F_{d, q, r, B} & =  \Big\{f : \mathbf{x} \mapsto \omega^0 + \langle \bm{\omega}^1, \mathbf{x} \rangle + \sum_{p=2}^q \langle \mathbf{T}_p, \mathbf{x}^{(p)} \rangle \;\Big|\\
                    & \quad \quad \mathbf{T}_p = \sum_{j=1}^r \bm{\omega}^{p,j}_1 \bullet \cdots \bullet \bm{\omega}^{p, j}_p \;\text{for}\; p \geq 2, \\
                    & \quad \quad \|\bm{\omega}^1\|_2 \leq B, \; \text{and}\; \|\bm{\omega}^{p,j}_i\|_2 \leq B \;\text{for all}\; p,j,i\Big\}.
\end{align*}
Let $(\mathbf{x}, y)$ be distributed according to $\Probability$, and assume $\|\mathbf{x}\|_2 \leq B_{\mathbf{x}}$ almost surely. The Rademacher complexity, with respect to $(\mathbf{x}, y)$, of this hypothesis class satisfies
\[    
\R_n(\F_{d,q,r,B}) \leq \frac{c r {(1 + 8 B B_\mathbf{x})}^q q^2 (\sqrt{q d \ln d} + \sqrt{d})}{\sqrt{n}} 
\]
where $c$ is a constant.
\end{theorem}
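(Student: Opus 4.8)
The plan is to bound $\R_n(\F_{d,q,r,B})$ by decomposing the class along two axes — the degree $p$ and the rank index $j$ — reducing everything to one hard object, the class of rank-one degree-$p$ predictors, which I would control by a covering-number and chaining argument.

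First I would exploit the subadditivity of Rademacher complexity under Minkowski sums of function classes, $\R_n(\F + \G) \le \R_n(\F) + \R_n(\G)$, which is immediate from the subadditivity of the supremum. Writing $f = \omega^0 + \langle \bm\omega^1, \mathbf x\rangle + \sum_{p=2}^q \langle \mathbf T_p, \mathbf x^{(p)}\rangle$ splits the complexity into a constant term (contributing at the $1/\sqrt n$ scale once $|\omega^0|$ is taken bounded), a linear term, and a sum over $p$ of degree-$p$ terms. The linear class $\{\mathbf x \mapsto \langle \bm\omega^1, \mathbf x\rangle : \|\bm\omega^1\|_2 \le B\}$ is standard: by Cauchy--Schwarz and Jensen its complexity is at most $B B_{\mathbf x}/\sqrt n$. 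For each degree $p$ the identity $\langle \bm\omega_1 \bullet \cdots \bullet \bm\omega_p, \mathbf x^{(p)}\rangle = \prod_{j=1}^p \langle \bm\omega_j, \mathbf x\rangle$ derived earlier lets me write $\mathbf T_p$ as a sum of $r$ rank-one tensors, so a second application of subadditivity gives $\R_n(\F_p) \le r\,\R_n(\F_p^{(1)})$, where $\F_p^{(1)} = \{\mathbf x \mapsto \prod_{j=1}^p \langle \bm\omega_j, \mathbf x\rangle : \|\bm\omega_j\|_2 \le B\}$. This produces the leading factor $r$ and reduces the problem to the rank-one product class.

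The heart of the argument is bounding $\R_n(\F_p^{(1)})$, and here I would pass through covering numbers and Dudley's entropy integral rather than manipulate the product directly inside the expectation. The key estimates are: (i) every $f \in \F_p^{(1)}$ has magnitude at most $(B B_{\mathbf x})^p$, since each factor obeys $|\langle \bm\omega_j, \mathbf x\rangle| \le B B_{\mathbf x}$; and (ii) the parametrization $(\bm\omega_1,\ldots,\bm\omega_p) \mapsto \prod_j \langle\bm\omega_j,\cdot\rangle$ is Lipschitz, which I would establish by a telescoping multilinear expansion: replacing a single $\bm\omega_k$ by $\bm\omega_k'$ perturbs the product by at most $B_{\mathbf x}\|\bm\omega_k - \bm\omega_k'\|_2\,(B B_{\mathbf x})^{p-1}$, so the total Lipschitz constant is of order $p B_{\mathbf x}(B B_{\mathbf x})^{p-1}$. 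Combining (ii) with the volumetric bound $\log N(\delta) \le d\log(1 + 2B/\delta)$ for the $\ell_2$-ball of radius $B$ in $\reals^d$, applied to each of the $p$ constituent vectors, yields a covering of $\F_p^{(1)}$ whose log-cardinality is of order $pd\log(\,\cdot\,/\epsilon)$; feeding this into Dudley's integral $\R_n(\F_p^{(1)}) \lesssim n^{-1/2}\int_0^{(BB_{\mathbf x})^p}\sqrt{\log N(\epsilon)}\,d\epsilon$ contributes the $\sqrt{pd}$ factor, with the $\ln d$ and additive $\sqrt d$ terms emerging from the radius factors inside the logarithm and the separate treatment of the integral's endpoints, while the repeated per-factor bound $B B_{\mathbf x}$ accumulates into a term of the form $(1 + cBB_{\mathbf x})^p$.

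Finally I would reassemble: summing $\R_n(\F_p) \le r\,\R_n(\F_p^{(1)})$ over $p = 2,\ldots,q$ and adding the constant and linear pieces. The degree sum is geometric in $B B_{\mathbf x}$ and dominated by its $p=q$ term, yielding the global $(1 + 8BB_{\mathbf x})^q$; the $q^2$ collects the $\poly(q)$ factors (the $p$ in the Lipschitz constant, the number of summed degrees, and the $\sqrt q$ inside $\sqrt{qd\ln d}$), and division by $\sqrt n$ gives the claimed rate. The main obstacle is steps (ii)--(iii): because a rank-one predictor is a genuine \emph{product} of linear forms rather than a single linear functional, its Rademacher complexity does not decompose additively, and the entire difficulty lies in showing that the multilinear product map is Lipschitz with a well-controlled constant and that the induced covering number has the correct dependence on $d$ and $p$ — once that is secured, Dudley's theorem does the rest.
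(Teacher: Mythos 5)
Your outer decomposition---peeling off the constant and linear terms, then using subadditivity of Rademacher complexity over the Minkowski sum to reduce to $r$ copies of the rank-one degree-$p$ class for each $p$, and finally summing a geometric-type series in $BB_{\mathbf{x}}$ against $\poly(q)$ factors---is exactly the paper's first step (via the structural result of Bartlett--Mendelson). Where you genuinely diverge is the key lemma on the rank-one class $\F_p^{(1)}$. The paper observes that $\sup_{\|\bm{\omega}_j\|\le B}\sum_i\sigma_i\prod_j\langle\bm{\omega}_j,\mathbf{x}_i\rangle = B^p\,\|\sum_i\sigma_i\mathbf{x}_i^{(p)}\|$, the tensor spectral norm of a Rademacher sum of data tensors, and then invokes the Nguyen--Drineas--Tran bound on expected spectral norms of random tensors, finishing with a crude Frobenius-norm estimate of the slice maxima; this is short and is where the $8^q$, $\sqrt{q d\ln d}$, and $\sqrt{d}$ all come from. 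You instead cover the parameter space (a product of $p$ Euclidean balls), push the cover through the multilinear map via your telescoping Lipschitz estimate with constant $pB_{\mathbf{x}}(BB_{\mathbf{x}})^{p-1}$, and run Dudley's entropy integral. That route is sound: the volumetric bound gives $\log N(\epsilon)\lesssim pd\log(1+2p(BB_{\mathbf{x}})^p/\epsilon)$, and the integral over $[0,(BB_{\mathbf{x}})^p]$ yields roughly $(BB_{\mathbf{x}})^p\sqrt{pd\log p}/\sqrt{n}$, which is dominated by the stated bound (your parenthetical attribution of the $\ln d$ to the logarithm's radius factors is not quite right---your version naturally produces $\log p$ rather than $\ln d$---but since the theorem is an upper bound this is harmless, indeed slightly sharper in that factor). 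The trade-off: the paper's argument is shorter and exploits the tensor structure directly but leans on a nontrivial external random-tensor result; yours is self-contained and uses only standard chaining machinery, at the cost of more bookkeeping in the Lipschitz/covering step, which you correctly identify as the crux.
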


This result follows from reformulating the Rademacher complexity as the spectral norm of a Rademacher sum of data-dependent tensors and then applying a recent bound from~\cite{NDT13} on the spectral norm of random tensors. A full proof is provided in the appendix.

%
%

\section{Empirical Evaluations}
\label{sxn:empirical}

In this section, we evaluate the performance of Tensor Machines on several
real-world regression and classification datasets and demonstrate their
attractiveness in learning polynomial features relative to other
state-of-the-art algorithms.


\subsection{Experimental setup}

The nonconvex optimization problem~\eqref{eqn:tmobjective} is central to
fitting Tensor Machines. We consider two solvers for~\eqref{eqn:tmobjective}.
The first uses the implementation of L-BFGS provided in Mark Schmidt's
\texttt{minFunc}~\footnote{The 2012 release, retrieved from
\url{http://www.cs.ubc.ca/~schmidtm/Software/minFunc.html}} MATLAB package for
optimization~\cite{BNS94}.  Since L-BFGS is a batch optimization algorithm, we
also investigate the use of SFO, a stochastic quasi-Newton solver designed to
work with minibatches~\cite{SPG14}. We use the reference implementation of SFO
provided by
Sohl-Dickstein~\footnote{\url{https://github.com/Sohl-Dickstein/Sum-of-Functions-Optimizer/blob/master/README.md}}.
We refer to the two algorithms used to fit Tensor Machines as TM-Batch and
TM-SFO, respectively.

The choice of initialization strongly influences the performance of both
TM-Batch and TM-SFO. Accordingly, we used initial points sampled from a
$\mathcal{N}(0, \alpha^2)$ distribution. The variance $\alpha^2$ as well as the
regularization parameter $\lambda$ in~\eqref{eqn:tmobjective} are set by
tuning.  

We compared TM-Batch and TM-SFO against several recent algorithms for learning
polynomials: CRAFTMaps~\cite{HXGD13}, Basis Learner~\cite{LSS14a} and
Apple~\cite{ABHLT14}. To provide a baseline, we also used Kernel Ridge
Regression (KRR) to learn polynomials; in cases where the training set contained
more than 40,000 points, we randomly chose a subset of size 40,000 to
perform KRR.  For CRAFTMaps, the up-projection dimensionality is set to be $4$
times the down-projection dimensionality.  We did not obtain reasonable
predictions using the original implementation of Apple in \texttt{vowpal
wabbit}, so we implemented the model in MATLAB and used this implementation to
train and test the model, but reported the running time of the VW
implementation (with the same choice of parameters).  The remaining methods are
implemented in pure MATLAB. The experiments were conducted on a machine with four
6-core Intel Xeon 2 GHz processors and 128 GB RAM.

Because the performance of nonlinear learning algorithms can be very
data-dependent, we tested the methods on a collection of $10$ publicly
available datasets to provide a broad characterization of the behavior of these
algorithms.  A summary of the basic properties of these datasets can be found
in Table~\ref{table:datasets}. For each dataset, the target degree $r$ was
chosen to be the value that minimized the KRR test error.  
We proprocessed the data by first normalizing the input features to have
similar magnitudes --- viz., so that each column of the training matrix has
unit Euclidean norm --- then scaling each datapoint to have unit Euclidean
norm.

For regression tasks, we used the squared loss, and for binary classification
tasks we used the logistic loss.  For regression tasks, the test error is
reported as $\| \hat y - y^\ast \|_2 / \|y^\ast\|_2$ where $\hat y$ and
$y^\ast$ are the prediction and ground truth, respectively; inaccuracy is
reported for classification tasks.

\begin{table}[t]
\begin{center}
\small
\begin{tabular}{|c|c|c|c|c|}
\hline
  Name & Train/Test split & $d$ & Type & $q$\\
\hline
  Indoor & 19937/11114 & 520 & regression & 3 \\
  Year & 463715/51630 & 90 & regression & 2 \\
  Census & 18186/2273 & 119 & regression & 2 \\
  Slice & 42291/10626 & 384 & regression & 5 \\
  Buzz & 466600/116650 & 77 & regression & 3 \\
  Gisette & 6000/1000 & 5000 & binary & 3 \\
  Adult & 32561/16281 & 122 & binary & 3 \\
  Forest & 522910/58102 & 54 & binary & 4 \\
  eBay search & 500000/100000 & 478 & binary & 3 \\
  Cor-rna & 59535/157413 & 8 & binary & 4 \\
\hline
\end{tabular}
\end{center}
\caption{Description of datasets. The target degree $q$ was selected by minimizing the KRR test error on each dataset (or a subset).
}
\label{table:datasets}
\end{table}

Each algorithm is governed by several interacting parameters, but for each
algorithm we identify one major parameter: the number of iterations for
TM-Batch, the number of epochs for TM-SFO, the number of features for
CRAFTMaps, the layer width for Basis Learner, and the expansion parameter for
Apple. We choose
optimal values for the non-major parameters through a 10-fold cross-validation on
the training set.  We varied the major parameter over a wide range 
and recorded the test error and running times for the different values until the performance
of the algorithms saturated. By
saturation, we mean that the improvement in the test error is less than $0.02
\times\text{err}(\texttt{krr})$ where $\text{err}(\texttt{krr})$ denotes the
test error of KRR.  For each combination of parameters, the average
test errors and running times over $3$ independent trials are reported.



\subsection{Overall performance} 
To compare their performance, we applied TM-Batch, TM-SFO, CRAFTMaps, Basis Learner, and
Apple to the $10$ datasets listed in Table~\ref{table:datasets}.  We did not
evaluate Factorization Machines because efficient algorithms for fitting FM
models are only available in the literature for the case $q=2$.  It is unclear
whether FMs with higher values of $q$ can be fit efficiently. 

We present
the computation/prediction accuracy tradeoffs of the considered algorithms in Figure~\ref{fig:overall}.
The data plotted are the test errors and running times of the algorithms relative to those of KRR:
$$ \text{relerr} = (\text{err}(\texttt{alg}) - \text{err}(\texttt{krr}))/\text{err}(\texttt{krr}) $$
and
$$ \text{reltime} = \text{time}(\texttt{alg})/\text{time}(\texttt{krr}). $$
The median values of \text{relerr} and \text{reltime} as well as the corresponding first and third quartiles are shown.

The detailed results can be found in Table~\ref{table:full_results}. 
The performance of TM-Batch is consistent across the datasets in the sense that it provides reliable predictions in a fairly short amount of time.  TM-SFO converges to lower quality solutions than TM-Batch, but has lower median and variance in runtime. CRAFTMaps and Basis Learner take significantly more time to yield solutions almost as accurate as the two Tensor Machine algorithms. As one might expect, due to its greedy nature, Apple is the fastest of the algorithms (the time of the \texttt{vowpal wabbit} implementation is reported), but of all the algorithms, TM-Batch and TM-SFO deliver the lowest worst-case relative errors.

\begin{table*}[h]
 \begin{center}
 \scriptsize
 \begin{tabular}{c||c||ccccc}
   Name & KRR & TM-Batch & TM-SFO & CRAFTMaps & Basis Learner & Apple \\
 \hline
   Indoor & 0.00961/50 & 0.00802/3.2(4) & 0.0182/3.9(4) & 0.00559/2.7(300) & {\bf 0.00241}/97 & 0.00712/6.5 \\ 
   Year & 0.00485/310 & 0.00565/85(5) & {\bf 0.00484}/74(5) & 0.00494/39(200) & 0.00496/67 & 0.00489/60 \\
   Census &  0.0667/46 & 0.0774/12(5) &  0.0802/45(5) &  {\bf 0.0767}/7.3(1000) & 0.0788/8.1 & 0.0950/7.3\\
   Slice &  0.0118/441  & 0.0229/80(3) &  {\bf 0.0213}/73(3) &  0.0465/707(7000) & 0.0501/1034 &  0.0788/46 \\
   Buzz &  0.407/624  & 0.407/568(2) &  0.409/440(2) &  0.408/2105(1200) & {\bf 0.373}/2472 &  0.496/325\\
   Gisette &  0.027/6.6 & 0.0266/16(4) &  0.0254/29(4) &  0.0300/63(11500) & 0.0270/62 &  {\bf 0.0240}/32\\
   Adult &  0.150/182  & {\bf 0.149}/0.9(4) &  0.151/2.0(4) &  0.154/17(700) & 0.149/ 30 &  0.150/0.8 \\
   Forest & 0.148/361 & 0.184/657(5) & {\bf 0.178}/569(5) & 0.196/1023(750) & 0.180/3494 & 0.219/14\\
   eBay search & 0.197/446 & 0.197/612(5) & {\bf 0.192}/349(5) & 0.281/1054(800) &  0.281/1642 &  0.269/122 \\
   Cor-rna & 0.0446/423 & {\bf 0.0453}/8.4(3) & 0.0489/7.2(3) & 0.0462/19(200) & 0.0493/1.7 & 0.0489/4.2 \\
 \hline
 \end{tabular}
 \end{center}
 \caption{Test error/running time of each methods on $10$ datasets. The rank parameter $r$ used in TM-Batch and TM-SFO and number of features used in CRAFTMaps is listed in the parenthesis. For Apple, the test error is computed based on a MATLAB implementation of Apple but the running time is recorded by using the \texttt{vowpal wabbit} framework with the same choice of parameters. }
 \label{table:full_results}
 \end{table*}
Both TM solvers required less than $r = 5$ rank-one TM features for each individual degree fit for \emph{all} datasets. Since only one dataset benefited from a quintic fit, the TM models required at most $1 + d + \sum_{p=2}^q prd \leq 72d$ parameters to fit each dataset; this should be compared to the CRAFTMaps models, which required at least $400d \leq qrd$ parameters to fit each dataset, generally produced fits with higher error than the TM solvers, and required longer solution times.


\subsection{Effect of rank parameter $r$}
To investigate the effect of the rank parameter $r$ on the test error of Tensor Machines,
we evaluated both TM-Batch and TM-SFO on the Census and Slice datasets.
On Census, since the target degree is $2$, we evaluate Factorization Machines (FM) as well.
The results are shown in Figure~\ref{fig:krr}.

As expected, increasing $r$ leads to smaller test errors. On Slice, where the target degree $q=5$, the gap between the performance of TMs and KRR is relatively large (almost a factor of 2), while on Census, where $q=2$, the gap is slighter.Interestingly, on Census, increasing rank does not lead to a higher accuracy after $r\geq 3$.
We also observe on Census that the error of TMs is lower than that of FMs; this behaviour was also observed on the other datasets when $q=2.$




\begin{figure}
\begin{centering}
\begin{tabular}{cc}
\subfigure[Census]{
\includegraphics[width=0.48\textwidth]{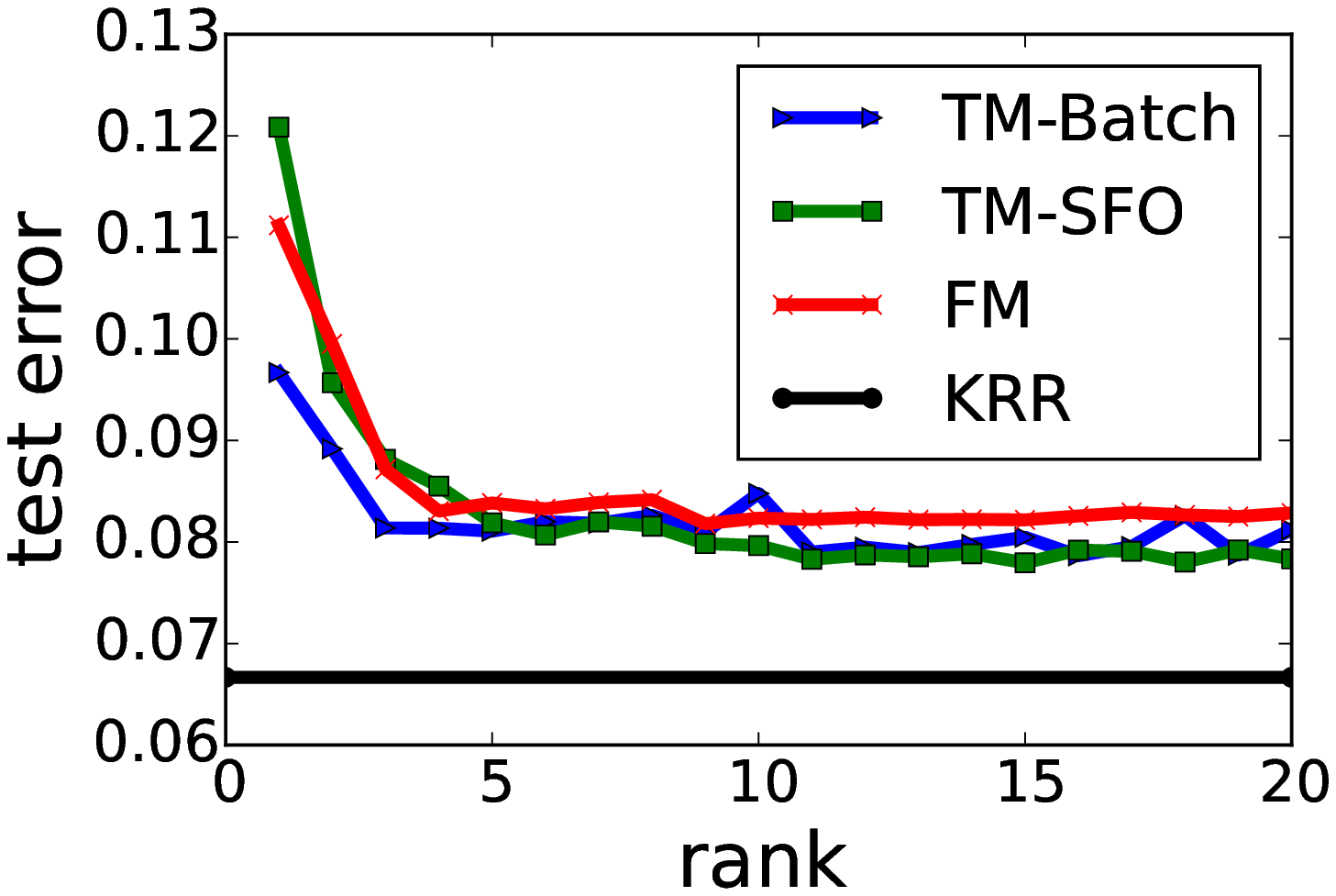}
}
&
\subfigure[Slice]{
\includegraphics[width=0.48\textwidth]{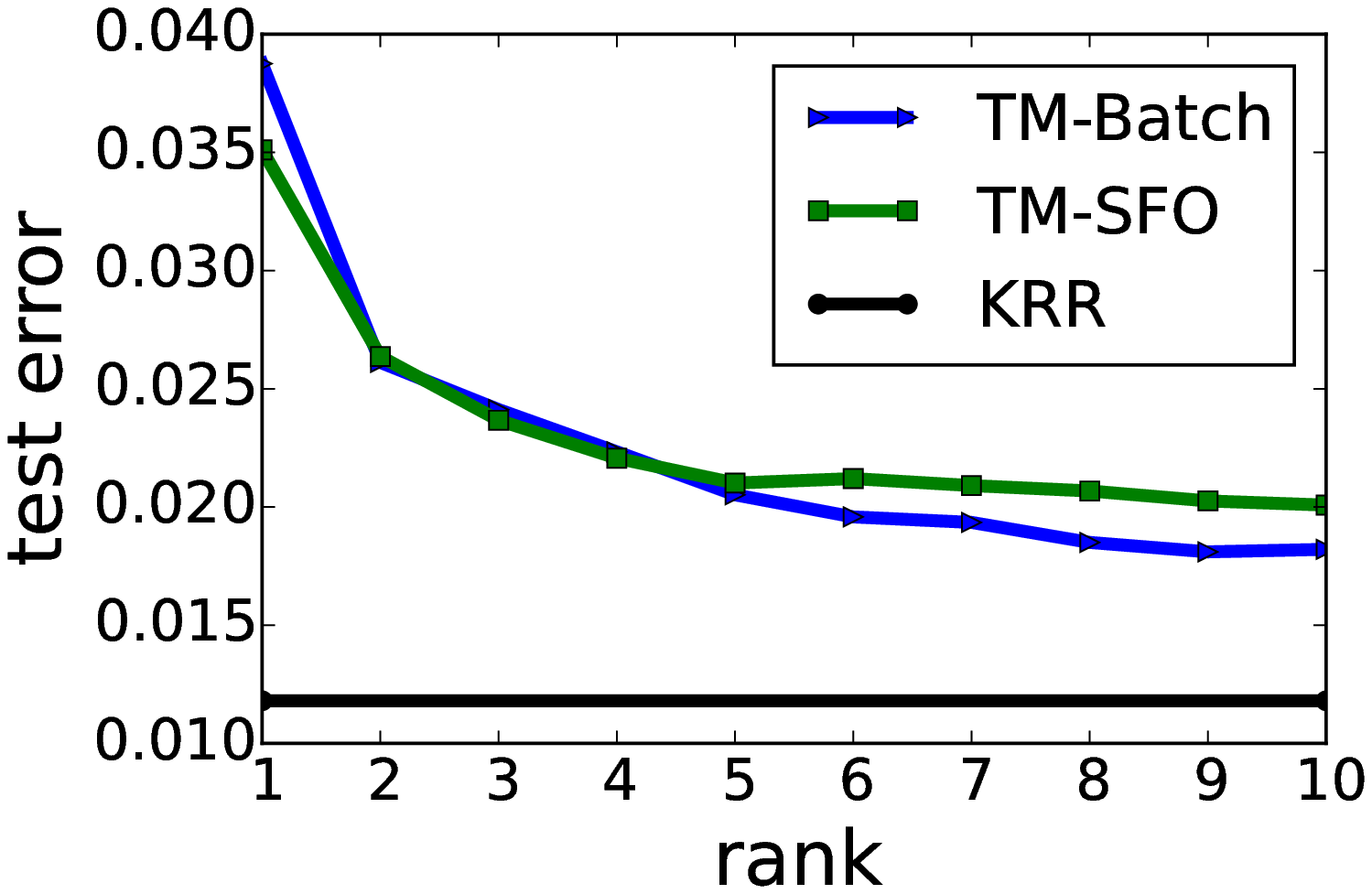}
}
\end{tabular}
\caption{Test error of TM-Batch and TM-SFO with different rank parameters $r$ on the Census ($q=2$) and Slice ($q=5$) datasets.
  The test error of using kernel ridge regression (KRR) is also plotted.
 On Census, we also evaluate Factorization Machines (FM).
 }
\label{fig:krr}
\end{centering}
\end{figure}


\subsection{Time-accuracy tradeoffs}

To assess the time-accuracy tradeoffs of the various algorithms, in Figure~\ref{fig:time_err} we plot the test error vs the training time for the Slice and Forest datasets. The training time is determined by varying the settings of each algorithm's major parameter.

TM-Batch and TM-SFO compare favorably against the other methods: they either produce a much lower error than the other methods (on Slice) 
or reach a considerably low error much faster (on Forest). Similar patterns were observed on most of the datasets we considered.

\begin{figure}
\begin{centering}
\begin{tabular}{cc}
\subfigure[Slice]{
\includegraphics[width=0.48\textwidth]{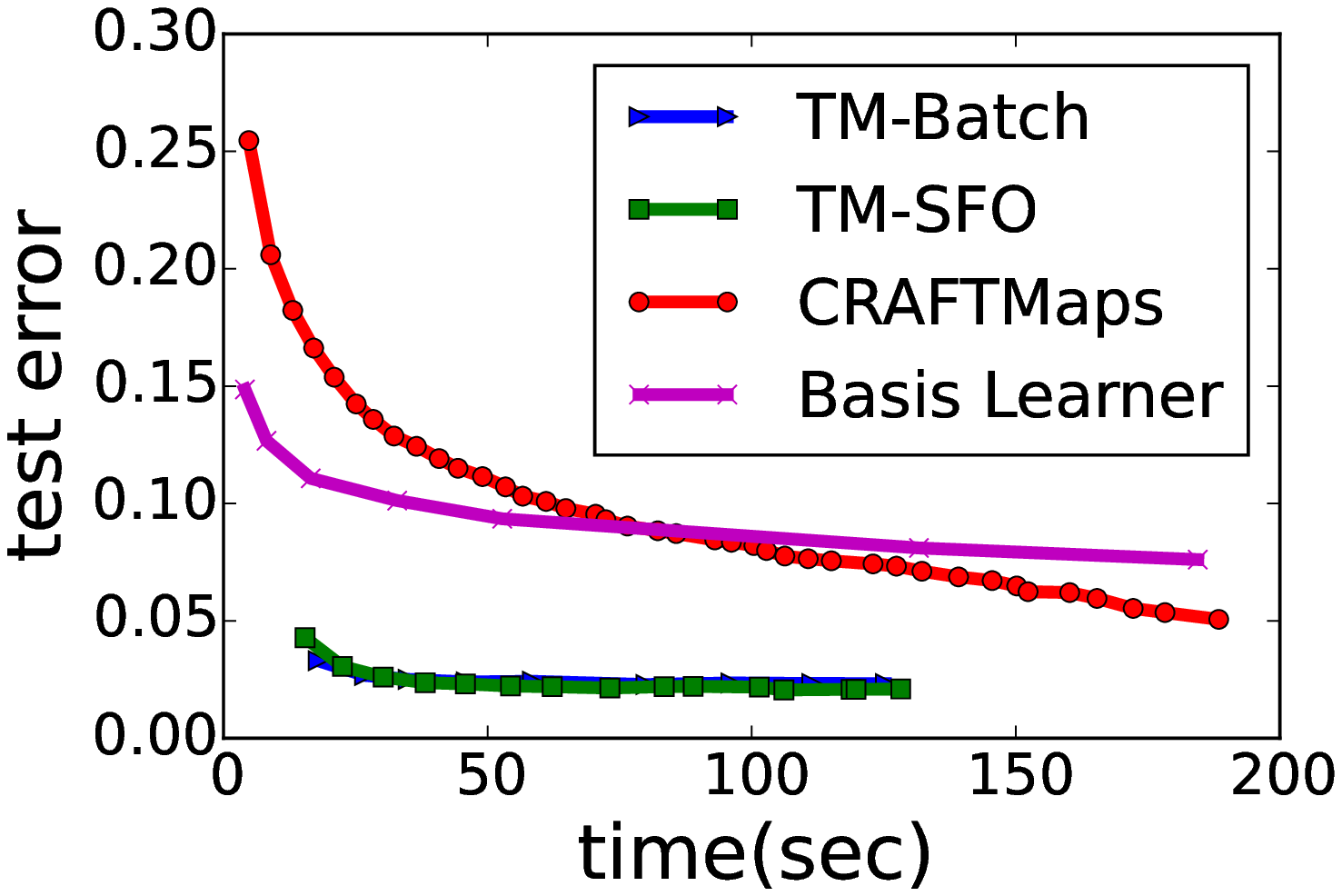}
}
&
\subfigure[Forest]{
\includegraphics[width=0.48\textwidth]{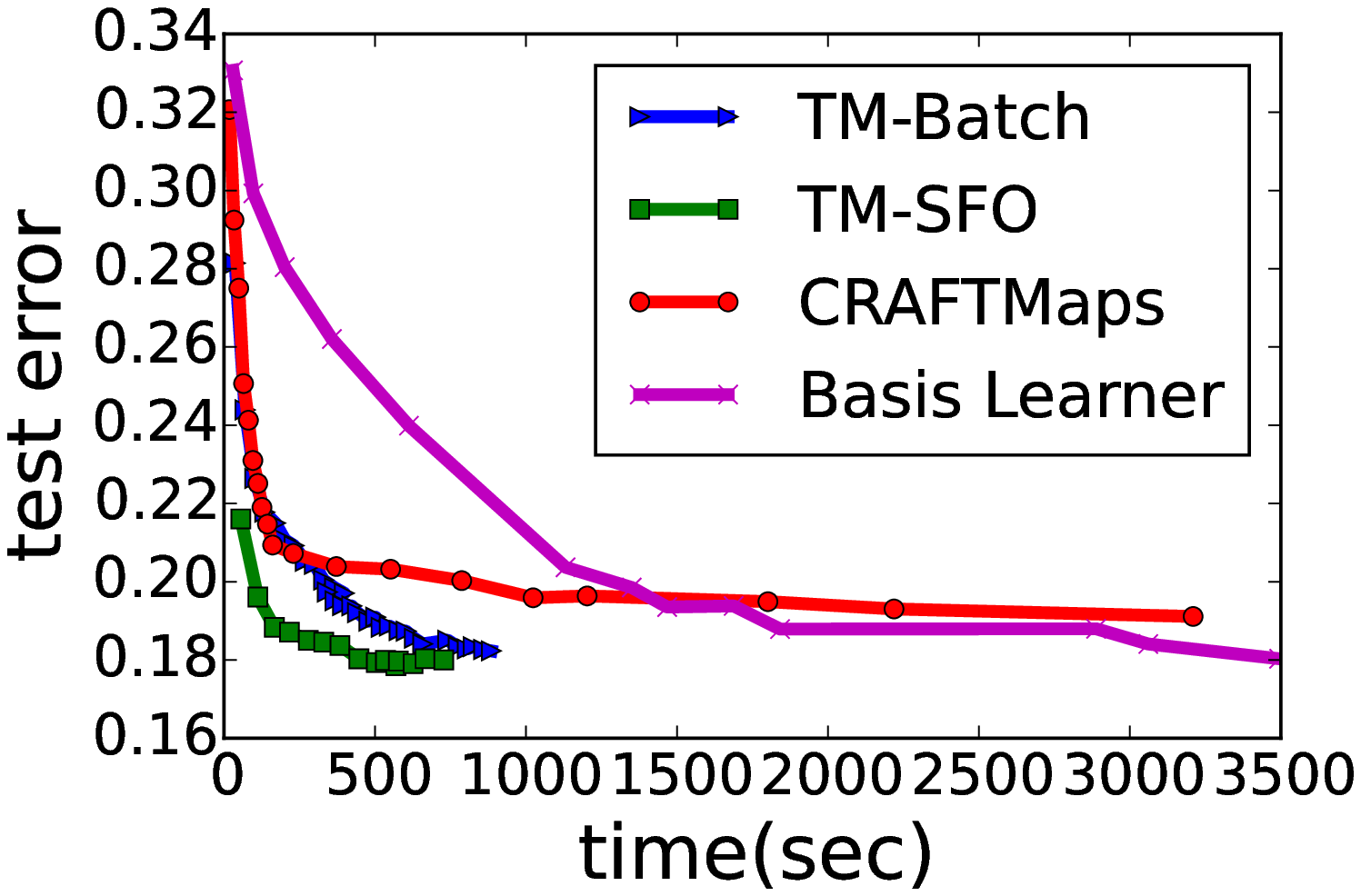}
}
\end{tabular}
\caption{ Evaluation of time-accuracy tradeoffs of the algorithms on the Slice ($q=5)$ and Forest ($q=4$) datasets.
The rank parameters $r$ for TM-Batch and TM-SFO on Slice and Forest are $3$ and $5$, respectively.
}
\label{fig:time_err}
\end{centering}
\end{figure}


\subsection{Scalability of TM-SFO on eBay search dataset}

Since SFO needs to access only a small mini-batch of data points per update, it
is suitable for fitting TMs on datasets which cannot fit in memory all at once.
Here we explore the scalability of TM-SFO on a private eBay search dataset of
1.2 million data points with dimension $478$. Each data point $(u_a,u_b)$
comprises the feature vectors associated with a pair of items that were
returned as the results of a search on the eBay website and were subsequently
visited. The goal is to fit a polynomial for the task of classifying which item
was clicked on first: $a$ or $b$. In our experiments, we randomly selected
100,000 data points to be the test set; training sets of variable size were
selected from the remainder.
 
For comparison, we also evaluate CRAFTMaps on the same task. In TM-SFO, we fix
the rank parameter $r$ to be $5$ and number of epochs to be $50$. In CRAFTMaps,
we fix the number of random features to be $800$. These parameters are the
optimal settings from the experiments used to generate
Table~\ref{table:full_results}. We report both the classification error and
training time as the size of training set grows. 

\begin{figure}
\begin{centering}
\includegraphics[width=0.9\textwidth]{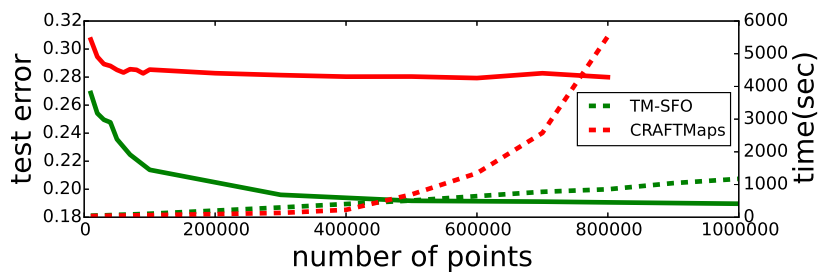}
\caption{Scalability of TM-SFO on eBay search dataset ($q=3$, $r=5$).
The number of epochs is set to be $50$. The plot shows the test error (black
solid line) and training time (red dash line) as a function of the number of
training point used.} 
\label{fig:ebay} 
\end{centering} 
\end{figure}

The results are shown in Figure~\ref{fig:ebay}. We observe that the running
time of TM-SFO grows almost linearly with the size of the training set, while
that of CRAFTMaps grows superlinearly. Also, as expected, because CRAFTMaps
choose the hypothesis space independently of the target, increasing the size of
the training set without also increasing the size of the model does not give
much gain in performance past 100,000 training points. We see that the
target-adaptive nature of TMs endows TM-SFO with two advantages. First, for a
fixed amount of training data the errors of TMs are significantly lower than
those of CRAFTMaps. Second, because the hypothesis space evolves as a function
of the target and training data, the training error of TM-SFO exhibits
noticeable decay up to a training set size of about 500,000 points, long past the point
where the CRAFTMaps test error has saturated.

\bibliographystyle{plain}
\bibliography{tensor_feature}

\appendix

\section{Proof of Theorem~\ref{thm:tm-rademacher-complexity}}

Theorem~\ref{thm:tm-rademacher-complexity} is a corollary of the following bound on the Rademacher complexity of a rank-one Tensor Machine.

\setcounter{theorem}{1}
\begin{theorem}
    \label{thm:simple-tm-complexity}
    Let $\F_{d,q,B}$ denote the class of degree-$q$ rank-one Tensor Machines on $\reals^d$ with constituent vectors constrained to lie in the ball of radius $B$:
    \[
        \F_{d,q,B} = \Big\{ f : \mathbf{x} \mapsto \Big \langle \bm{\omega}_1 \bullet \cdots \bullet \bm{\omega}_q, \mathbf{x}^{(q)} \Big \rangle\;|\; \|\bm{\omega}_j\|_2 \leq B \;\text{for}\; j=1,\ldots,q. \Big\}
    \]
Let $(\mathbf{x}, y)$ be distributed according to $\Probability$, and assume $\|\mathbf{x}\|_2 \leq B_{\mathbf{x}}$ almost surely. The Rademacher complexity with respect to $(\mathbf{x}, y)$ of this hypothesis class satisfies
\[
    \R_n(\F_{d,q,B}) \leq \frac{c {(8 B B_\mathbf{x})}^q q(\sqrt{qd \ln d} + \sqrt{d})}{\sqrt{n}},
\]
where $c$ is a constant.
\end{theorem}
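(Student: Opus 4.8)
\emph{Proof plan.} The plan is to reduce the Rademacher complexity to the expected spectral norm of a random tensor and then invoke the tensor concentration bound of~\cite{NDT13}. First I would rewrite a rank-one degree-$q$ Tensor Machine in its tensor form $f(\mathbf{x}) = \prod_{j=1}^q \langle \bm{\omega}_j, \mathbf{x}\rangle = \langle \bm{\omega}_1 \bullet \cdots \bullet \bm{\omega}_q, \mathbf{x}^{(q)}\rangle$, as established in the discussion connecting polynomials and tensors. Substituting this into the definition of the Rademacher complexity and moving the sum over data points inside the inner product gives
\[
\sum_{i=1}^n \sigma_i f(\mathbf{x}_i) = \Big\langle \bm{\omega}_1 \bullet \cdots \bullet \bm{\omega}_q,\ \mathbf{M}\Big\rangle, \qquad \mathbf{M} := \sum_{i=1}^n \sigma_i \mathbf{x}_i^{(q)},
\]
so that $\mathbf{M}$ is a Rademacher sum of the fixed, data-dependent rank-one tensors $\mathbf{x}_i^{(q)}$.

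The second step is to evaluate the supremum over the constituent vectors. Since $\langle \bm{\omega}_1 \bullet \cdots \bullet \bm{\omega}_q, \mathbf{M}\rangle$ is multilinear in $(\bm{\omega}_1,\ldots,\bm{\omega}_q)$, its supremum over the product of radius-$B$ balls is attained on the product of spheres; writing $\bm{\omega}_j = B\mathbf{u}_j$ with $\|\mathbf{u}_j\|_2 = 1$ and noting that the sign of any $\mathbf{u}_j$ may be chosen freely, the supremum equals $B^q \|\mathbf{M}\|_{\mathrm{spec}}$, where $\|\mathbf{M}\|_{\mathrm{spec}} = \sup_{\|\mathbf{u}_j\|_2 = 1}\langle \mathbf{u}_1 \bullet \cdots \bullet \mathbf{u}_q, \mathbf{M}\rangle$ is the tensor spectral norm. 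Hence
\[
\R_n(\F_{d,q,B}) = \frac{B^q}{n}\, \ExpectSub{\{\mathbf{x}_i\},\sigma}{\|\mathbf{M}\|_{\mathrm{spec}}}.
\]

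For the final step I would condition on the data $\{\mathbf{x}_i\}$ and apply the spectral-norm bound of~\cite{NDT13} to the Rademacher sum $\mathbf{M}$. Its hypotheses are checked using $\|\mathbf{x}_i\|_2 \leq B_{\mathbf{x}}$: for any fixed unit vectors the contraction $\langle \mathbf{u}_1\bullet\cdots\bullet\mathbf{u}_q,\mathbf{M}\rangle = \sum_i \sigma_i \prod_j \langle \mathbf{u}_j, \mathbf{x}_i\rangle$ is a sum of independent terms bounded by $B_{\mathbf{x}}^q$ with total variance at most $n B_{\mathbf{x}}^{2q}$, hence subgaussian with parameter $\sqrt{n}\,B_{\mathbf{x}}^q$. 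Combining this with an $\epsilon$-net over the $q$ spheres (the mechanism underlying the NDT bound) and a union bound controls the supremum, producing a concentration term of order $\sqrt{qd\ln d}$ together with a lower-order mean term of order $\sqrt{d}$, while the tensor discretization constants account for the factors $q$ and $8^q$. Because this estimate is deterministic given that $\|\mathbf{x}_i\|_2 \leq B_{\mathbf{x}}$ almost surely, taking the outer expectation over the data leaves it unchanged, and collecting constants yields $\ExpectSub{\sigma}{\|\mathbf{M}\|_{\mathrm{spec}}} \leq c\,(8B_{\mathbf{x}})^q q(\sqrt{qd\ln d}+\sqrt{d})\sqrt{n}$; dividing by $n$ and multiplying by $B^q$ then gives the claimed $\bigO(1/\sqrt{n})$ bound.

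The main obstacle is this last step: correctly matching our Rademacher-sum tensor to the normalization and hypotheses of the NDT theorem, since the $q$- and $8^q$-type factors are sensitive to how the covering radius is chosen and to how the discretization error telescopes across the $q$ modes. By contrast, the reduction to the spectral norm in the first two steps is routine multilinear algebra.
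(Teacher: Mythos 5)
Your proposal is correct and follows essentially the same route as the paper: rewrite $\sum_i \sigma_i f(\mathbf{x}_i)$ as a contraction against $\mathbf{T}_\sigma = \sum_i \sigma_i \mathbf{x}_i^{(q)}$, pull out $B^q$ to reduce the Rademacher complexity to $\frac{B^q}{n}\Expect{\|\mathbf{T}_\sigma\|}$, and control that tensor spectral norm via the bound of~\cite{NDT13}. The only divergence is in how the final estimate is executed: the paper invokes the NDT theorem in its stated form (a sum of expected maximum slice norms), bounds each slice by $\sqrt{d}$ times the largest entry, and then controls that entry by the Frobenius norm via Jensen's inequality to obtain the $\sqrt{n}B_{\mathbf{x}}^q$ factor, whereas you sketch re-deriving the same estimate from subgaussianity of the contractions plus an $\varepsilon$-net over the product of spheres; either path yields the stated bound.
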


\begin{proof}[Proof of Theorem~\ref{thm:tm-rademacher-complexity}]
    Every function in $\F_{d,q,r,B}$ can be written as the sum of a constant, a linear function of the form $\mathbf{x} \mapsto \langle \bm{\omega}, \mathbf{x}\rangle$ with $\|\bm{\omega}\| < B$, and $r$ functions from each of $\F_{d,2,B}, \ldots, \F_{d,q,B}.$ The Rademacher complexity of $\reals$ is zero, and a straightforward calculation  establishes that the Rademacher complexity of linear functions of the specified form is no larger than $B B_{\mathbf{x}}/\sqrt{n}.$ It follows from a simple structural result on Rademacher complexities~\cite{BM02}[Theorem 12] that
    \[
        \R_n(\F) \leq \frac{BB_{\mathbf{x}}}{\sqrt{n}} + s \sum_{p=2}^q \R_n(\F_{d,p,B}).
        \]
Applying Theorem~\ref{thm:simple-tm-complexity}, we have that
\begin{align*}
    \R_n(\F) & \leq \frac{BB_{\mathbf{x}}}{\sqrt{n}} + cr \sqrt{\frac{d \ln d}{n}} \sum_{p=2}^q (8 B B_{\mathbf{x}})^p p^{3/2} + cr \sqrt{\frac{d}{n}} \sum_{p=2}^q (8BB_{\mathbf{x}})^p p \\
             & \leq cr \sqrt{\frac{d \ln d}{n}} (1 + 8 BB_{\mathbf{x}})^q \sum_{p=1}^q p^{3/2} + cr \sqrt{\frac{d}{n}} (1 + 8 BB_{\mathbf{x}})^q \sum_{p=1}^q p  \\
             & \leq \frac{cr (1 + 8 BB_{\mathbf{x}})^q q^2 ( \sqrt{q d \ln d} + \sqrt{d})}{\sqrt{n}}.
\end{align*}

\end{proof}

\begin{proof}[Proof of Theorem~\ref{thm:simple-tm-complexity}]
    Given the data points $\mathbf{x}_1,\ldots,\mathbf{x}_n$, let 
    $$  \hat \R_n(\F) = \frac{1}{n} \ExpectSub{\sigma}{ \sup_{f\in\F} \sum_{i=1}^n \sigma_i f(\mathbf{x}_i)}$$
  so that $\R_n(\F) = \Expectation_{\{\mathbf{x}_1, \ldots, \mathbf{x}_n\}} \hat \R_n(\F).$
  From the definition of $\F_{d,q,B},$ we have that 
  \begin{align*}
      \hat \R_n(\F) & = \frac{1}{n} \ExpectSub{\sigma}{ \sup_{\bm{\omega}_1, \ldots, \bm{\omega}_q \in \Omega_B} \sum_{i=1}^n \sigma_i \langle \bm{\omega}_1 \bullet \cdots \bullet \bm{\omega}_q, \mathbf{x}_i^{(q)} \rangle} \\
                    & = \frac{1}{n} \ExpectSub{\sigma}{ \sup_{\bm{\omega}_1, \ldots, \bm{\omega}_q \in \Omega_B} \Big \langle \bm{\omega}_1 \bullet \cdots \bullet \bm{\omega}_q, \sum_{i=1}^n \sigma_i \mathbf{x}_i^{(q)} \Big \rangle }
   \end{align*}
   where  $ \Omega_B = \{ \bm{\omega} \in \reals^d \;\vert\; \|\bm{\omega}\|_2 \leq B \} $. 
   
   Define $\mathbf{T}_{\sigma} = \sum_{i=1}^n \sigma_i \mathbf{x}_i^{(q)}$.
   Recall the definition of the spectral norm of an order-$q$ tensor $\mathbf{T}:$
   $$ \|\mathbf{T}\| = \sup_{\mathbf{v}_1,\ldots,\mathbf{v}_q\in \Omega_1} \lvert \langle \mathbf{T}, \bm{v}_1 \bullet \cdots \bullet \bm{v}_q \rangle \lvert. $$
   It follows from this definition that 
  \begin{align}
  \label{eqn:empirical-rademacher-estimate}
  \hat \R_n(\F) & \leq \frac{1}{n} \ExpectSub{\sigma}{ B^q \cdot \sup_{\bm{\omega}_1, \ldots, \bm{\omega}_q \in \Omega_B} \Big \langle \mathbf{T}_\sigma, \frac{\bm{\omega}_1}{\|\bm{\omega}_1\|_2} \bullet \cdots \bullet \frac{\bm{\omega}_q}{\|\bm{\omega}_q\|_2} \Big \rangle} \notag\\
  & = \frac{1}{n} B^q \ExpectSub{\sigma}{\|\mathbf{T}_\sigma\|}.
  \end{align}
  
  For simplicity, we drop the subscript $\sigma$ from $\mathbf{T}_\sigma$ in the following. Note that $\Expect{\mathbf{T}} = 0$, since Rademacher variables are mean-zero. We now apply Theorem~2 of~\cite{NDT13}, which bounds the expected spectral norm of the difference between a tensor Rademacher sum and its expectation with the sum of the expected maximum Euclidean lengths of one-dimensional slices through the tensor:
  \begin{align*}
      \ExpectSub{\sigma}{\|\mathbf{T}\|} & = \ExpectSub{\sigma}{\|\mathbf{T} - \Expectation\mathbf{T}\|} \\
                                         & \leq c 8^q (\sqrt{q \ln d} + 1) \left( \sum\nolimits_{j=1}^q \ExpectSub{\sigma}{\max_{i_1, \ldots, i_{j-1}, i_{j+1}, \ldots, i_q} \left( \sum\nolimits_{i_j=1}^d T_{i_1,\ldots,i_q}^2 \right)^{\tfrac{1}{2}}}\right).
  \end{align*}
  We replace the innermost sum with a maximum to obtain an estimate involving the expected size of the largest entry in $\mathbf{T}:$
\begin{align}
    \label{eqn:tensor-norm-estimate}
    \ExpectSub{\sigma}{\|\mathbf{T}\|} & \leq c8^q (\sqrt{q \ln d} + 1) \left(\sum\nolimits_{j=1}^q \ExpectSub{\sigma}{\max_{i_1, \ldots, i_{j-1}, i_{j+1}, \ldots, i_q} (d \max_{i_j} T_{i_1,\ldots,i_q}^2)^{\tfrac{1}{2}}} \right) \notag\\
                                       & = c8^qq(\sqrt{q d \ln d} + \sqrt{d}) \ExpectSub{\sigma}{\max_{i_1,\ldots,i_q} |T_{i_1, \ldots, i_q}|}.
\end{align}
Next we bound the maximum entry of $\mathbf{T}$ with the Frobenius norm of $\mathbf{T},$ and apply Jensen's inequality to obtain
\begin{align*}
    \ExpectSub{\sigma}{\max_{i_1, \ldots, i_q} |T_{i_1, \ldots, i_q}|} & \leq \ExpectSub{\sigma}{ \langle \mathbf{T}, \mathbf{T} \rangle^{\tfrac{1}{2}} } \leq \big( \ExpectSub{\sigma}{\langle \mathbf{T}, \mathbf{T} \rangle} \big)^{\tfrac{1}{2}} \\
                                                                       & = \left(\sum_{i_1,\ldots,i_q=1}^d \ExpectSub{\sigma}{T_{i_1, \ldots, i_q}^2 } \right)^{\tfrac{1}{2}}.
\end{align*}
Recalling the definition of $\mathbf{T},$ we have that
\begin{align*}
    \left(\ExpectSub{\sigma}{\max_{i_1, \ldots, i_q} |T_{i_1, \ldots, i_q}|}\right)^2 & \leq \sum\nolimits_{i_1, \ldots, i_q=1}^d \ExpectSub{\sigma}{\left( \sum\nolimits_{i=1}^n \sigma_i \mathbf{x}_i^{(q)} \right)_{i_1, \ldots, i_q}^2} \\
                                                                       & = \sum_{i_1,\ldots,i_q=1}^d \sum_{i,j=1}^n \ExpectSub{\sigma}{\sigma_i \sigma_j} \big(\mathbf{x}^{(q)}_i\big)_{i_1, \ldots, i_q}\big(\mathbf{x}^{(q)}_j\big)_{i_1, \ldots, i_q} \\
                                                                       & = \sum_{i_1, \ldots, i_q=1}^d \sum_{i=1}^n \big(\mathbf{x}^{(q)}_i\big)_{i_1,\ldots,i_q}^2 = \sum_{i=1}^n \langle \mathbf{x}^{(q)}_i , \mathbf{x}^{(q)}_i \rangle. 
\end{align*}
It is readily established that 
\[
    \langle \mathbf{v}_1 \bullet \cdots \bullet \mathbf{v}_q, \mathbf{v}_1 \bullet \cdots \bullet \mathbf{v}_q \rangle = \|\mathbf{v}_1\|^2 \cdots \|\mathbf{v}_q\|^2
\]
for any rank-one tensor, so it follows that
\begin{equation}
    \label{eqn:tensor-max-entry-estimate}
    \ExpectSub{\sigma}{\max_{i_1, \ldots, i_q} |T_{i_1, \ldots, i_q}|} \leq \sqrt{\sum_{i=1}^n \|\mathbf{x}_i\|^{2q}} \leq \sqrt{n} B_\mathbf{x}^q.
\end{equation}

From equations~\eqref{eqn:empirical-rademacher-estimate},~\eqref{eqn:tensor-norm-estimate}, and~\eqref{eqn:tensor-max-entry-estimate}, we conclude that
\[
    \R_n(\F) = \ExpectSub{\mathbf{x}_1, \ldots, \mathbf{x}_n}{ \hat \R_n(\F)} \leq \frac{c (8 B B_{\mathbf{x}})^q q (\sqrt{q d \ln d} + \sqrt{d})}{\sqrt{n}}.
\]
\end{proof}

\end{document}